\documentclass[11pt,a4paper]{article}

\usepackage{inputenc,hyperref}
\usepackage[T1]{fontenc}

\usepackage{array}
\usepackage{booktabs}

\newcommand{\RR}{\mathbb{R}}

\providecommand{\keywords}[1]{\textbf{Keywords:} #1}

\usepackage{amsfonts,amsmath,amssymb,amsthm}
\usepackage{graphicx}
\usepackage{xcolor}
\usepackage{pifont}
\usepackage{times}
\usepackage{geometry}

\usepackage{hyperref}
\usepackage{subfigure}
\usepackage{pgf}
\usepackage{tikz}
\usepackage{url}
\usepackage{mathtools}
\usetikzlibrary{arrows,automata}
\usetikzlibrary{positioning}

\usepackage{algorithm}
\usepackage{algorithmic}

\newtheorem{Lemma}{Lemma}

\newcommand{\tabincell}[2]{\begin{tabular}{@{}#1@{}}#2\end{tabular}}  










\usepackage[square,numbers]{natbib}
\bibliographystyle{abbrvnat}

\begin{document}

\title{Discriminative Principal Component Analysis:  \textsc{A Reverse Thinking}\footnote{\emph{Discriminative PCA}}
}


\author{Hanli Qiao\\
School of Science, Guilin University of Technology, China\\
\texttt{ hanlinqiao77@gmail.com}}
\date{}

\maketitle

\begin{abstract}
In this paper, we propose a novel approach named by \emph{Discriminative Principal Component Analysis} which is abbreviated as \textit{Discriminative PCA} in order to enhance separability of PCA by Linear Discriminant Analysis (LDA). The proposed method performs feature extraction by determining a linear projection that captures the most scattered discriminative information. The most innovation of \textit{Discriminative PCA} is performing PCA on discriminative matrix rather than original sample matrix. For calculating the required discriminative matrix under low complexity, we exploit LDA on a converted matrix to obtain within-class matrix and between-class matrix thereof. During the computation process, we utilise direct linear discriminant analysis (DLDA) to solve the encountered SSS problem. For evaluating the performances of \textit{Discriminative PCA} in face recognition, we analytically compare it with DLAD and PCA on four well known facial databases, they are PIE, FERET, YALE and ORL respectively. Results in accuracy and running time obtained by nearest neighbour classifier are compared when different number of training images per person used. Not only the superiority and outstanding performance of \textit{Discriminative PCA} showed in recognition rate, but also the comparable results of running time.
\end{abstract}

\keywords{Discriminative PCA, DLDA, PCA, discriminative matrix, face recognition}


\section{Introduction}\label{introduction}

Principal component analysis (PCA) and linear discriminant analysis (LDA) are two of the most popular linear dimensionality reduction approaches. Due to their effectivenesses in feature exraction, PCA, LDA and their variants have been continuious developed and applied into numerous applications in various areas involving pattern recognition, computer vision, industrial engineering and data analysis, etc. For illustrations, a novel variant of PCA, namely the adaptive block sparse PCA based on penalized SVD is proposed to deduce  a new multiple-set canonical correlation analysis (mCCA) method, which is applied to the problem of multi-subject fMRI data sets analysis in \cite{Seghouane18}. And two multilinear extensions of PCA is investigated in \cite{Pacella18} with application to an emission control system. Whereas LDA is often used in field of classifications, such as the application in text classification based on  self-training and LDA topic models is introduced in \cite{Pavlinek17} and the application of facial exprssion classification using LDA and threshold SVM is studied by literature \cite{Shah17}. For accelerating the convergence rate of the incremental LDA algorithm, paper \cite{Ghassabeh15} derives new algorithms by optimizing the step size in each iteration using steepest descent and conjugate direction methods. Besides these applications, literatures \cite{Wu17,Stuhlsatz12} root deep learning techniques into LDA to learn non-linear transformations.

Generally speaking, PCA is trying to find an orthogonal transformation to convert a set of observations of possibly correlated variables into a set of most scattered values of linearly uncorrelated variables. Large number of theoritical analysis and real applications can prove that PCA is simple and efficient, however PCA concerns with the overall sample data, which is a drawback of limiting the same classification rate when changing different viewpoint. Whereas LDA performs excellent separable performances by maximising  between-class distance simultaneously minimising within-class distance however under much expensive computational cost. Owing to the importance of feature extraction in numerous areas, design more effective approach such that exploit advantages of existing methodologies and overcome their shortcomings is promising. In this sense, we aim to enhance separability of PCA by LDA in this paper and at the same time reserve the outstanding performance of low computational complexity. For completing our goal, we need to perform PCA on a discriminative matrix which can be calculated by LDA. However, there are two severe problems exist in regular LDA. One is the computational difficulties and another one appears in the cases that number of observations is less than their dimensions, which is so called \textit{small sample size}-SSS problem. Therefore in order to utilise LDA effectively, we need to find a reasonable solution to resolve the both problems. Thanks to the extensive applications of LDA, there are many scientists focus on how to solve SSS probelm. One of the popular solutions is so called \textit{PCA plus LDA}, which is a method apply PCA as pre-processing step. The relevant theoretical foundation and its applications can be found in \cite{PCA-LDA03, KPCA-LDA04, Sun18}. But incompatibility is a potential problem of \textit{PCA plus LDA}, which may lead to PCA discard dimensions that contain important discriminative information for LDA. Therefore another more feasible and effective approach is designed by \cite{DLDA01, Lu03}, which is named by direct linear discriminant analysis (DLDA). The proposed DLDA algorithms can accept high-dimensional data input and optimize Fisher's criterion directly without any feature extraction operation by discarding the null space of between-class matrix and meantime keeping the null space of within-class matrix to tackle the incompatibility problem of  \textit{PCA plus LDA}. Due to its effectiveness, researchers adopt DLDA in diverse applications and receive outperforming results in \cite{Portillo17, Meshgini13}. Therefore back to our original intention, in this paper we adopt DLDA to solve SSS problem during the implementation process. 

 
\subsection{Related Works}
The essential problem of \textit{Discriminative PCA} is how to build the computational framework in order the derived subspace to possess the superiorities of LDA and PCA and overcome their limitations simultaneously. According to the fact that classifier fusion is promising research directions in the field of pattern recognition and computer vision. It seems reasonable to expect that a better performance could be obtained by combining the resulting classifiers. Then idea of fusing PCA and LDA hence becomes feasible. Bsaed on this consideration, some relevant combining techiniques are designed for applications in diverse areas involving face recognition, verification, re-identification, classification and fault detection \cite{Oh13, Marcialis02, Sadeghi10, Borade16, Sharma06, Deng17}. Authors fuse PCA-LDA in data preprocessing part for extraction of facial features in \cite{Oh13} by integrating two covariance matrices together into a single covariance matrix. The fused subspce is expected to preserve the nature of both subfaces of PCA and LDA in hence to improve its performances. However during the computation process, the two covariance matrices are calculated directly on original data matrix which is constructed by highly dimensional vectors. This is an intractable task especially in calculation of corresponded eigenvalues. Similar problems occur in the other literatures, despite the fusion strategies perform outstanding results, there still exist two serious drawbacks. One is the expensive computation cost and the other one is the easy occurrence of SSS problem during LDA procedure. Take ORL facial database \cite{ORL} for an illustation, there are 40 people totally and 10 facial images per person with size of pixels $112\times 92$, then each image can be seen as a point in 10304-dimensional space. If we choose 5 training images per individual in face recognition, then the size of training matrix is $10304\times 200$. In this case, the fusion of PCA and LDA will be conducted directly in two $10304\times 10304$ covariances matrices so that the computational cost becomes very expensive which will lead to application difficulties. The SSS probelm that comes with it will cause inefficiency. Therefore fuse PCA and LDA directly cannot meet our purpose, other thoughts to enhance discriminant information for PCA need to be developed. Not like the fusion classifiers, novel methods incorporating the discriminant constraints inside for non-negative matrix factorization (NMF) and kernel NMF (KNMF) are proposed in  \cite{Zafeiriou06, Liang10}. Which inspires us to discover discriminant projections for sample data after projection to the obtained low-dimensional subspace. For guaranteeing outperforming properties with low computational cost and solving the potential SSS problem, we enhance discriminant information inside of PCA procedure by adopting DLDA strategy on a converted small size matrix with respect to original sample matrix. For understanding \emph{Discriminative PCA}, the next section will introduce the processes of feature extraction in face recognition by using of PCA and LDA.

\section{PCA and LDA}
Through introduction, it is now clear that \textit{Discriminative PCA} is such process that performing PCA on a discriminative matrix which is computed by LDA. Therefore in order to come into our novel approach better, at the beginning we briefly introduce the schemes of PCA and LDA in feature extraction of face recognition. Firstly the symbols and their descriptions are concluded in table \ref{symbols}.

\begin{table*}[ht] 
\centering
\begin{tabular}{|c|c|c|}
\hline
Symbols  & Descriptions  & Dimensions\\
\hline
$\Omega=\{\omega_{ij}\}$    & \tabincell{c}{training matrix consisting of the set of input \\training images, one column represents one facial image}  & $MN\times cl$ \\
\hline
$\omega_{ij}$  & jth face of ith individual & $MN\times 1$ \\
\hline
$MN$  & dimensions  of $\omega_{ij}$  & scalar\\
\hline
$c$   & the number of persons   & scalar\\
\hline
$l$   & \tabincell{c}{number of training images per person, we assume that \\each individual has same number of training images}   & scalar\\
\hline
$cl$   & total number of training images   & scalar\\
\hline
$\bf C$   & covariance matrix  & $MN \times MN$\\
\hline
$\bar{\omega}$   & mean of all training samples  & $MN \times1 $\\ 
\hline
$\bar{\omega_i}$  & mean of the $i$th-class sample  & $MN \times1$\\
\hline
${\bf{S_b}}$  & between-class matrix   & $MN \times MN$\\
\hline
${\bf{S_w}}$  & within-class matrix   & $MN \times MN$\\
\hline
\end{tabular}
\caption{Partial symbols used in this paper.}
\label{symbols}
\end{table*}

The represented symbols listed in table \ref{symbols} are described in details. The input training samples is a set of $c$ observations defined by $${\bf{\Omega}}=\{ \Omega_1, \Omega_2, \dots, \Omega_c\},\ \Omega_i=\{\omega_{i1}, \omega_{i2},\cdots, \omega_{il}\},\ i=1,\cdots, c$$ $c$ and $l$ have their own meanings thereof as table \ref{symbols} describes. Then the training sample matrix can be represented as
\begin{equation}\label{training matrix}
{\bf{\Omega}}=\left[\omega_{11},\dots,\omega_{1l}, \cdots, \omega_{c1},\dots, \omega_{cl}\right]
\end{equation}
which is a real matrix composed by $\omega_{ij}$ sorted as column and the size is $MN\times cl$.

\subsection{PCA implementation by the covariance method}\label{PCA}
PCA \cite{PCA91, PCA10} can be used to reduce dimensions by mapping original sample matrix into a $p$-dimensional feature subspace, where $p \ll MN.$ This is an algorithm based on Karhunen-Lo\`{e}ve transform which is a common orthogonal transform that choose a dimensionality reducing linear projection that maximizes the scatter of all projected samples. Its orthogonal basis functions used in this representation are determined by the covariance function of the process. In detail, the transform was found by expanding the process with respect to the basis spanned by the eigenvectors of the covariance function. The purpose of PCA hence is finding a linear transformation projecting the original sample matrix onto a $p$-dimensional feature subspace $V$ which is defined by the linear transformation as follows,
\begin{equation}
{\bf{Y}}=V^T{\bf{\Omega}}
\end{equation}
where $V\in \RR^{MN\times p}$ is a matrix with orthonormal columns composed by $v_k,\ k=1,\dots,p$. The computation details will be described in next subsequent paragraphs. For making sure projected samples are maximal scatter without correlations, the process implemented on centred sample matrix as the forthcoming paragraph introduces.

Matrix $\Omega$ in table \ref{symbols} is described thereof each column represents an observation which can be seen a point in $MN$-dimensional space. In covariance method, the essential step for PCA is seeking a set of $p$ orthonormal vectors $v_k$, which best represent the distribution of all samples. This process can be satisfied through diagonalizing the covariance matrix of centred sample data by \eqref{covariance matrix}
\begin{equation} \label{covariance matrix}
{\bf{C}}=\frac{1}{cl}\sum_{i=1}^c\sum_{j=1}^l(\omega_{ij}-\bar{\omega})(\omega_{ij}-\bar{\omega})^T
\end{equation}
We learn from table \ref{symbols} that $\bar{\omega}$ is the average of all observations which is defined by $\bar{\omega}=\displaystyle{\frac{1}{cl}}\sum\limits_{i=1}^c\sum\limits_{j=1}^l\omega_{ij}.$ The required feature vectors are the orthonormal eigenvectors of $\bf{C}$ corresponding to $p$ largest eigenvalues, set $A=\displaystyle{\frac{1}{\sqrt{MN}}}[\omega_{11}-\bar{\omega},\cdots, \omega_{1l}-\bar{\omega},\cdots, \omega_{cl}-\bar{\omega}]$ then $v_k$ are chosen so that the diagonal elements of
\begin{equation}
\Lambda= diag \left(\lambda_1,\cdots,\lambda_{MN} \right)=V^T( AA^T)V
\end{equation}
attains larger values. However, the covariance matrix is $MN\times MN$ real symmetric matrix, which is so large therefore easily cause the computation difficulties in calculation of eigenvalues and the corresponding eigenvectors. For overcoming this shortage, more feasible method should be considered, that is decomposing the novel matrix 
\begin{equation*}
\widetilde{\bf{C}}=A^TA
\end{equation*}
by its eigenvectors. the size now is $cl \times cl$, which is much smaller than $\bf{C}$. We assume $u_k$ are the eigenvectors of $\widetilde{\bf{C}}$ corresponding to eigenvalue $\lambda_k$, then the feature vectors $v_k$ derived from can be calculated by $Au_k$ since
\begin{equation}
A^TAu_k=\lambda_ku_k\rightarrow AA^T(Au_k)=\lambda_k(Au_k)
\end{equation}
Now the feature matrix can be calculated by $V=AU$, where $U =[u_1,\dots,u_p]$ which is composed by eigenvectors corresponding to top $p$ largest eigenvalues. PCA has successful application in face recognition which is well known as \emph{eigenfaces}.

\subsection{LDA-fisher method}\label{LDA}
Although the feature space yielded by PCA contains maximal scatter information without correlations among samples, PCA is sensitive to unexpected variations such as illumination, expressions and poses in face recognition because lacking of discriminant information. In these situations, variations between images of the same person are larger than image variations derived from changing person identity. Thus the PCA projections may not be optimal from a discriminant viewpoint. The limitation in separability of \textit{eigenfaces} was overcomed by \cite{LDA97}. Which is so called LDA algorithm, also notified as \emph{fisherface} in face recognition. LDA is a supervised method, then it makes sense to use of labelled information of training samples to build a more reliable method for dimensionality reduction of the feature space. 

Similarly in PCA, feature matrix $W$ of \emph{fisher} LDA can be defined by the following linear transformation
\begin{equation}
{\bf{Y}}=W^T{\bf{\Omega}}
\end{equation}
where $W\in \RR^{MN\times m}$ can be computed in such a way that the ratio of the between-class matrix and the within-class matrix is maximized for all training samples. We use ${\bf{S_b}}$ and ${\bf{S_w}}$ to denote between-class and within-class matrices respectively. The detailed formations are listed in \eqref{discriminant matrices}.
\begin{equation}\label{discriminant matrices}
\begin{aligned}
{\bf{S_b}}&=\sum_{i=1}^c(\bar{\omega_i}-\bar{\omega})(\bar{\omega_i}-\bar{\omega})^T\\
{\bf{S_w}}&=\sum_{i=1}^c\sum_{j=1}^l(\omega_{ij}-\bar{\omega_i})(\omega_{ij}-\bar{\omega_i})^T
\end{aligned}
\end{equation}
where $\bar{\omega_i}=\displaystyle{\frac{1}{l}\sum_{j=1}^l}\omega_{ij},$ then the feature matrix $W$ of \emph{fisher} LDA can be obtained by
\begin{equation}\label{fisher}
\begin{aligned}
W&=\mathop{\arg\max}_{W}  \frac{W^T{\bf{S_b}}W}{W^T{\bf{S_w}}W}\\
&=[w_1, w_2, \cdots, w_m]
\end{aligned}
\end{equation}
where $w_i,\ i=1,\cdots,m$ is the eigenvectors of $\bf{S_b}$ and $\bf{S_w}$ corresponding to the top $m$ largest eigenvalues, i.e. 
\begin{equation*}
{\bf{S_w^{-1}}}{\bf{S_b}}w_i=\lambda_iw_i,\ i=1,2,\dots,m
\end{equation*}
Obviously, feature vectors $w_i$ can be obtained by eigenvalue decomposition of ${\bf{S_w^{-1}}}{\bf{S_b}}$, the size of which is $MN\times MN$. Similarly as in PCA, this is an intractable task. In addition to computational difficulty, one potential situation easily  confronted is that ${\bf{S_w}}$ is always singular especially in face recognition field. This stems from the fact that the number of observations much smaller than their dimensions, i.e. the pixels number, this is so called SSS problem. 

\subsection{Conclusions of PCA and LDA}
To illustrate the properties of PCA and LDA, we conclude their superiorities and shortcomings in this subsection to bring the original intention of \emph{Discriminative PCA}. The key advantage of PCA in face recognition is the low noise sensitivity, low computational cost and high recognition accuracy on ideal facial databases. Compared with LDA, PCA works better in case where number of class is less. However, PCA only consider the most scattered information among all samples, whereas LDA works better with large dataset having multiple classes which stems from the fact that class separability is an important factor while reducing dimensionality. Conversely, LDA has much more expensive computational cost than PCA and usually face to SSS problem. Based on these considerations, construct a novel technique that can preserve both superiorities of PCA and LDA and at the same time overcome their weaknesses is promising and necessary. \emph{Discriminative PCA} derives from this original intention. In the next section, the details of \emph{Discriminative PCA} will be explained well.

\section{Discriminative PCA}
The core idea of \textit{Discriminative PCA} is constructing an algorithm to find a subspace contains discriminative principal components, i.e. \emph{eigen}-subspace that possesses discriminant information. In order to fulfil this purpose, the essential implementation is performing PCA on discriminative matrices. Therefore the main problem need to be solved by \emph{Discriminative PCA} is how to compute discriminative matrices. In one word, similarly as in PCA and LDA, we are aiming to find a feature matrix $\bf{\Xi}$ such that build the following linear transformation 
\begin{equation}\label{projection_dpca}
{\bf{Y}}={\bf{\Xi}}^T\Omega
\end{equation}
where ${\bf{\Xi}}\in \RR^{MN\times p}$ composed by a set of feature vectors $\tilde{v}_k,\ k=1,\dots,p$ which are top $p$ principal components containing discriminant information. The idea of \emph{Discriminative PCA} arises a natural and feasible thought that is implementing PCA to the matrix that contains discriminative information instead of to original sample data. This is a process that use of LDA to enhance separability for PCA. However if we directly adopt LDA, then the conundrums of computational cost and SSS problem will stop us forward. From subsection \ref{LDA}, it is clearly the size of ${\bf{S_b}}$ and ${\bf{S_w}}$ is $MN \times MN$, which is too large to calculating the relevant discriminative matrices. Therefore the primary issue for us is designing a novel small size matrix such that the complexity shall be much lower when perform LDA on it than on original sample matrix. The original training sample matrix is hence converted into
\begin{equation}\label{New_sample}
\widetilde{\mathcal{S}} \coloneqq {\bf{\Omega}}^T{\bf{\Omega}},
\end{equation} 
which is a $cl\times cl$ matrix. $cl$ stands for class number as described in table \ref{symbols}, the size of $\widetilde{\mathcal{S}}$ therefore is much smaller now so that the computation crisis is solved successfully. In order to obtain the discrimination information of $\Omega$ rather than $\widetilde{\mathcal{S}}$, we should find the relationship between them. We start with the calculations  of ${\bf{\widetilde{S}_b}}$ and ${\bf{\widetilde{S}_w}}$ for $\widetilde{\mathcal{S}}$ by    
\begin{equation*}
\begin{aligned}
{\bf{\widetilde{S}_b}}&=\sum_{i=1}^{c} ({\bf{\Omega}}^T\bar{\omega_i}-{\bf{\Omega}}^T\bar{\omega})({\bf{\Omega}}^T\bar{\omega_i}-{\bf{\Omega}}^T\bar{\omega})^T\\  
&={\bf{\Omega}}^T \left[\sum_{i=1}^{c} (\bar{\omega_i}-\bar{\omega})(\bar{\omega_i}-\bar{\omega})^T \right]{\bf{\Omega}}\\ 
\end{aligned}
\end{equation*}
similarly, we have 
\begin{equation*}
\begin{aligned}
{\bf{\widetilde{S}_w}}&=\sum_{i=1} ^{c}\sum_{j=1}^{l}({\bf{\Omega}}^T\omega_{ij}-{\bf{\Omega}}^T\bar{\omega_i})({\bf{\Omega}}^T\omega_{ij}-{\bf{\Omega}}^T\bar{\omega_i})^T\\ 
&={\bf{\Omega}}^T \left[ \sum_{i=1} ^{c}\sum_{j=1}^{l}(\omega_{ij}-\bar{\omega_i})(\omega_{ij}-\bar{\omega_i})^T\right] {\bf{\Omega}}.
\end{aligned}
\end{equation*}
where ${{\bf{\widetilde{S}_b}}}$ and ${{\bf{\widetilde{S}_w}}}$ denote between-class matrix and within-class matrix of $\widetilde{\mathcal{S}}$ respectively. $\bf{S_b}$ and $\bf{S_w}$ are the corresponding discriminative matrices of $\Omega$. Then the relationship between them can be described as
\begin{equation}
\begin{aligned}
{\bf{\widetilde{S}_b}}&={\bf{\Omega}}^T {\bf{S_b}} {\bf{\Omega}} \\
{\bf{\widetilde{S}_w}}&={\bf{\Omega}}^T {\bf{S_w}} {\bf{\Omega}} 
\end{aligned}
\end{equation}
The feature space $\bf{\Xi}$ we committed to find should possess discriminant information, which means that $\bf{\Xi}$ can be obtained based on the optimal subspace $W$ described in formulation \eqref{fisher}. For the reason of computational difficulty, we calculate $\widetilde{W}$ of $\widetilde{\mathcal{S}}$ at the beginning in order to deduce $W$. We firstly give a lemma to explain the relationship between $W$ and $\widetilde{W}$.

\begin{Lemma}\label{novel_w}
Suppose $X,\ A$ are invertible matrices, if $\widetilde{w}$ is an eigenvector of $\widetilde{A}^{-1}\widetilde{B}$, and $\widetilde{A}=X^TAX,\ \widetilde{B}=X^TBX$, then $X\widetilde{w}$ is an  eigenvector of $A^{-1}B$.
\end{Lemma}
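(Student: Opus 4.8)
The plan is to start from the defining eigen-relation for $\widetilde{w}$ and strip off the left factor $X^{T}$ by hand. Writing $\lambda$ for the eigenvalue attached to $\widetilde{w}$, the hypothesis reads $\widetilde{A}^{-1}\widetilde{B}\,\widetilde{w}=\lambda\widetilde{w}$. Since $A$ and $X$ are invertible, so is $X^{T}$, and hence $\widetilde{A}=X^{T}AX$ is invertible as a product of invertible matrices; this justifies the existence of $\widetilde{A}^{-1}$ and legitimizes multiplying through by $\widetilde{A}$. Doing so converts the relation into the generalized form $\widetilde{B}\,\widetilde{w}=\lambda\widetilde{A}\,\widetilde{w}$, which is the cleaner object to manipulate because it no longer involves any inverse of $\widetilde{A}$.

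Next I would substitute the definitions $\widetilde{A}=X^{T}AX$ and $\widetilde{B}=X^{T}BX$ to obtain $X^{T}BX\widetilde{w}=\lambda\,X^{T}AX\widetilde{w}$. The key algebraic move is to cancel the common left factor $X^{T}$: because $X^{T}$ is invertible, left-multiplying by $(X^{T})^{-1}$ gives $B(X\widetilde{w})=\lambda\,A(X\widetilde{w})$. Finally, left-multiplying by $A^{-1}$, which exists by hypothesis, yields $A^{-1}B(X\widetilde{w})=\lambda(X\widetilde{w})$, and this is exactly the assertion that $X\widetilde{w}$ is an eigenvector of $A^{-1}B$, indeed with the \emph{same} eigenvalue $\lambda$.

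The only point that needs a word of care — and the nearest thing to an obstacle in an otherwise purely computational argument — is confirming that $X\widetilde{w}$ is a genuine eigenvector rather than the zero vector. This is immediate from the invertibility of $X$: an eigenvector $\widetilde{w}$ is nonzero by definition, and an invertible $X$ forces $X\widetilde{w}\neq 0$. I would also flag, as a byproduct, that the eigenvalue is preserved under the map $\widetilde{w}\mapsto X\widetilde{w}$; this preservation is precisely what makes the lemma the right tool for deducing $W$ from $\widetilde{W}$, since it lets the discriminant directions found cheaply on the small matrix $\widetilde{\mathcal{S}}$ transfer, together with their discriminative ranking, to the high-dimensional setting. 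Throughout, the one bookkeeping discipline to maintain is tracking which factors are invertible, so that every cancellation and inversion is justified.
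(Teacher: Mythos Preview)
Your argument is correct and follows essentially the same route as the paper: pass from the eigenvector relation to the generalized form $\widetilde{B}\widetilde{w}=\lambda\widetilde{A}\widetilde{w}$, substitute $\widetilde{A}=X^{T}AX$, $\widetilde{B}=X^{T}BX$, and strip off $X^{T}$ on the left using invertibility to reach $A^{-1}B(X\widetilde{w})=\lambda(X\widetilde{w})$. Your additional remark that $X\widetilde{w}\neq 0$ by invertibility of $X$ is a point the paper's proof leaves implicit.
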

\begin{proof}
\begin{equation*}
\begin{aligned}
&\widetilde{B}\widetilde{w}=\lambda \widetilde{A} \widetilde{w}\\
&\Rightarrow (X^TBX)\widetilde{w}=\lambda X^TAX \widetilde{w}\\
&\Rightarrow (X^TA)^{-1}(X^TB)X\widetilde{w}=\lambda X \widetilde{w}\\
&\Rightarrow A^{-1}(X^T)^{-1}(X^T)B(X\widetilde{w})=\lambda (X \widetilde{w})\\
&\Rightarrow A^{-1}B(X\widetilde{w})=\lambda (X \widetilde{w})
\end{aligned}
\end{equation*}
\end{proof}
According to Lemma \ref{novel_w}, $W$ can be obtained by
\begin{equation}\label{discriminant_novel}
W={\bf{\Omega}}\widetilde{W}
\end{equation}
where $\widetilde{W}\in \RR^{cl\times m}$ is constructed by eigenvectors $\widetilde{w}_k,\ k=1,\dots,m$ corresponding to top $m$ largest eigenvalues derived from $\displaystyle{ {\bf{\widetilde{S}_w}}^{-1}{\bf{\widetilde{S}_b}}\widetilde{W}}=\lambda \widetilde{W}$. But non-singularity is a necessary requirement for all the relevant computations above mentioned. However, particularly in face recognition it is very difficult to guarantee sample matrix and within-class matrix are non-singular, i.e. SSS problem. Therefore, we need to find an appropriate approach to solve this task. In addition to SSS problem, we find that the elemental values of ${\bf{\widetilde{S}_w}}$ and ${\bf{\widetilde{S}_b}}$ are too large to getting correct results. Before settle SSS problem, we first design regularization strategies to preserve that elemental values in an appropriate range. There are two ways to carry out this goal, which are described as below formula:
\begin{equation}\label{meanvalue_rule}
\left \{
\begin{array}{l}
{\bf{\widetilde{S}_w}}={\bf{\widetilde{S}_w}}./\overline{{\bf{\widetilde{S}}}}_w\\
{\bf{\widetilde{S}_b}}={\bf{\widetilde{S}_b}}./\overline{{\bf{\widetilde{S}}}}_b
\end{array}
\right.
\end{equation}
and 
\begin{equation}\label{maxvalue_rule}
\left \{
\begin{array}{l}
{\bf{\widetilde{S}_w}}={\bf{\widetilde{S}_w}}./\max ({\bf{\widetilde{S}_w}})\\
{\bf{\widetilde{S}_b}}={\bf{\widetilde{S}_b}}./\max ({\bf{\widetilde{S}_b}})
\end{array}
\right.
\end{equation}
where symbol $./$ stands for each element of matrix in numerator divide its denominator. $\overline{\bullet}$ is the mean value of all elements of matrix $\bullet$ and $\max (\bullet)$ is the maximal element of $ \bullet$. We call regularization showed in \ref{meanvalue_rule} as mean value rule and other one of \ref{maxvalue_rule} is maximum rule.

Now it is time to face SSS problem occurred during the process of calculate $\widetilde{W}$. SSS problem still exit because $$\mathop{rank}({\bf{\widetilde{S}_w}})\leq \min \{\mathop{rank}({\bf{\Omega}}),\ \mathop{rank}({\bf{S_w}}) \}$$ 
As mentioned in introduction, DLDA is used to solve SSS problem in this paper to resolve the defect of losing important discriminant information deduced by \emph{PCA plus LDA}. The most important innovation of DLDA is discard the null space of $\bf{\widetilde{S}_b}$ rather than discarding the null space of $\bf{\widetilde{S}_w}$. The benefit of this way is can reserve the most discriminative information from the subspace ${\bf{B}}' \cap {\bf{A}}$, where ${\bf{B}}'$ is the complementary space of \textbf{B}, which is null space spanned by eigenvectors corresponding to zero eigenvalues of $\bf{\widetilde{S}_b}$. \textbf{A} is the space which is spanned by eigenvectors corresponding to the relevant smaller eigenvalues of $\bf{\widetilde{S}_w}$. Before specifically explain the implementation process of DLDA, relevant denotes are given by here: $\bf{E_b}$ is a space spanned by eigenvectors of $\bf{\widetilde{S}_b}$ corresponding to its all eigenvalues which are used to construct diagonal matrix ${\bf{\Lambda}}_b$. Firstly we discard the eigenvectors of zero eigenvalues from $\bf{E_b}$, then $\hat{\bf{E}}_b$ stands for the space spanned by the remaining eigenvectors and $\hat{\bf{\Lambda}}_b$ is a diagonal matrix of which diagonal elements are the corresponding remaining eigenvalues. \vspace{0.1cm} Thereby ${\bf{B}}'$ can be calculated through the following formulation, in this way we have ${\bf{B}}'^T{\bf{\widetilde{S}_b}}{\bf{B}}'=\bf{I}_n$, where $\bf{I}_n$ is an identity matrix with size $n\times n$ and $n$ is the number of the remaining eigenvectors.
\begin{equation}\label{B}
{\bf{B}}'=\hat{\bf{E}}_b\hat{\bf{\Lambda}}_b^{-1/2}
\end{equation}
Based on formula \eqref{B}, the intersect subspace ${\bf{B}}' \cap {\bf{A}}$ can be achieved by the next steps. Firstly diagonalise ${\bf{B}}'^T{\bf{\widetilde{S}_w}}{\bf{B}}'$ to get the eigenvectors' space ${\bf{E}}_w$ and the corresponding diagonal matrix ${\bf{\Lambda}}_w$ which is composed by corresponding eigenvalues. The second step is discard the eigenvectors corresponding to largest eigenvalues. We use  $\hat{\bf{E}}_w$ and $\hat{\bf{\Lambda}}_w$ stand for the spaces spanned by the remaining eigenvectors and eigenvalues after discarding. Now $\widetilde{W}$ can be calculated by 
\begin{equation}
\widetilde{W}={\bf{B}'}\hat{\bf{E}}_w\hat{\bf{\Lambda}}_w^{-1/2}
\end{equation}
Eventually according to \eqref{discriminant_novel}, we get the discriminative matrix of $\bf{\Omega}$ by
\begin{equation}\label{W}
W={\bf{\Omega}}\widetilde{W}
\end{equation}

\renewcommand{\algorithmicrequire}{ \textbf{Input:}} 
\renewcommand{\algorithmicensure}{ \textbf{Output:}} 

\begin{algorithm*}[htbp] 
\caption{Pseudocode for calculating feature subspace $\bf{\Xi}$ by \emph{Discriminative PCA} algorithm.} 
\label{alg_DPCA} 
\begin{algorithmic} 
\REQUIRE ~~\\ 
Training facial images set ${\bf{\Omega}}$\\
\ENSURE ~~\\ 
Feature subspace $\Xi$ that used for linear transformation
\end{algorithmic}
\renewcommand{\algorithmicrequire}{ \textbf{Process:}} 
\begin{algorithmic} 
\REQUIRE ~~\\ 
\STATE Step 1. Calculate $\bf{\widetilde{S}_w}$ and $\bf{\widetilde{S}_b}$ of ${\bf{\Omega}}^T{\bf{\Omega}}$; 
\vspace{0.1cm}
\STATE Step 2. Regularize $\bf{\widetilde{S}_w}$ and $\bf{\widetilde{S}_b}$;
\vspace{0.1cm}
\STATE Step 3. Calculate the eigenvectors of $\bf{\widetilde{S}_b}$ corresponding to non-zero eigenvalues: ${\bf{\hat{E}}}_b=[{e_b}_1,\dots,{e_b}_n],\ {\bf{\hat{\Lambda}}}_b^{-1/2}=[{\lambda_b}_1^{-1/2},\dots,{\lambda_b}_n^{-1/2}]$;
\vspace{0.1cm}
\STATE Step 4. Let ${\bf{B'=\hat{E}}}_b{\bf{\hat{\Lambda}}}_b^{-1/2}$, then calculate the eigenvectors ${\bf{E}}_w$ of ${\bf{B'}}^T{\bf{\widetilde{S}}}_w{\bf{B'}}$;
\vspace{0.1cm}
\STATE Step 5. Discard the eigenvectors of ${\bf{E}}_w$ with respect to the largest eigenvalues to obtain ${\bf{\hat{E}}}_w=[{e_w}_1,\dots,{e_w}_m]$;
\label{step_5}
\vspace{0.1cm}
\STATE Step 6. Calculate feature subspace $\widetilde{W}$ of ${\bf{\Omega}}^T{\bf{\Omega}}$ by $\widetilde{W}={\bf{B}}'{\bf{\hat{E}}}_w{\bf{\hat{\Lambda}}}_w^{-1/2},\ {\bf{\hat{\Lambda}}}_w^{-1/2}=[{\lambda_w}_1^{-1/2},\dots, {\lambda_w}_m^{-1/2}]$;
\vspace{0.1cm}
\STATE Step 7. Calculate discriminative matrix $W$ of ${\bf{\Omega}}$ by $W={\bf{\Omega}}\widetilde{W}$;
\vspace{0.1cm}
\STATE Step 8. Compute the covariance of centred discriminative matrix $W$ through ${\bf{C}_W}=\displaystyle{\frac{1}{m}}(W-\bar{W})^T(W-\bar{W})$;
\vspace{0.1cm}
\STATE Step 9. Select the eigenvectors $[{e_{\bf{C}}}_1,\dots,{e_{\bf{C}}}_p]$ with the top $p$ largest eigenvalues of $\bf{C}$;
\label{step_9}
\vspace{0.1cm}
\STATE Step 10. Normalise $[{e_{\bf{C}}}_1,\dots,{e_{\bf{C}}}_p]$ to obtain feature subspace ${\bf{\Xi}}$ which is composed by $\left[\displaystyle{\frac{{e_{\bf{C}}}_1}{\Vert {e_{\bf{C}}}_1 \Vert}},\dots,\displaystyle{\frac{{e_{\bf{C}}}_p}{\Vert {e_{\bf{C}}}_p \Vert}} \right]$.
\vspace{0.1cm}
\end{algorithmic}
\end{algorithm*}

The subsequent task of \textit{Discriminative PCA} is perform PCA on discriminative matrix $W$ obtained by \eqref{W} to get the feature space $\bf{\Xi}$. As steps explained in subsection \ref{PCA}, we firstly construct the covariance matrix ${\bf{C}}_W$ based on centred $W$ and then select the orthonormal eigenvectors with top $p$ largest eigenvalues of ${\bf{C}}_W$ to construct the feature space $\bf{\Xi}$. Then we can extract the discriminative principal features by mapping all training samples to $\bf{\Xi}$ through \eqref{projection_dpca}. The detailed process is described in the pseudocode of \emph{Discriminative PCA} algorithm \ref{alg_DPCA}.

\section{Experimental Results}
For evaluating the effectiveness of \emph{Discriminative PCA}, in this section we compare the performances in recognition accuracy and running time of \emph{Discriminative PCA} with DLDA and PCA in face recognition on four facial databases, they are CMU PIE \cite{PIE}, FERET\cite{FERET}, YALE \cite{YALE} and ORL respectively. We use mean value rule \eqref{meanvalue_rule} to complete the regularization step of \emph{Discriminative PCA} on the four experimental facial databases. All the face images we used in experiments are gray-scale and dealt with aligning by the locations of eyes. And only facial part image reserved after cropping. These four facial databases have their various properties. For an illustration, CMU PIE contains different factors in pose, illumination and expression. Specifically over 40,000 facial images of 68 individuals are collected. Each person is imaged across 13 different poses, under 43 different illumination conditions, and with 4 different expressions. Figure \ref{PIE} shows the partial images of CMU PIE with brief description. 
\begin{figure}[!htbp]
\centering
\includegraphics[ width=7cm]{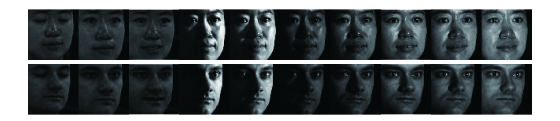}
\caption{We randomly select 15 people and 49 images for each person with the size of $64\times64$ pixels under much different situations involving illumination and expression from pose-No. 05 of PIE database.}
\label{PIE}
\end{figure}

FERET database derives from the Face Recognition Technology (FERET) program, which is a large database of facial images, divided into development and sequestered portions. There are color- and gray-version FERET databases containing more than 10,000 images under various situations involving pose, age, expressions, etc. We choose 50 people and 7 images per each individual of gray-version FERET database in experiments. These images are tiff format with pixels $80\times 80$ and partial of them are displayed in Figure \ref{FERET}.
\begin{figure}[!htbp]
\centering
\includegraphics[width=7cm]{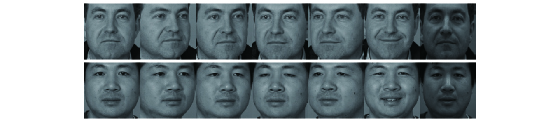}
\caption{Partial images of Gray FERET face database.}
\label{FERET}
\end{figure}

YALE face database contains 165 images of 15 individuals. There are 11 images per each person, one per different facial expression or configuration as partial images showed in figure \ref{YALE}: center-light, w/glasses, happy, left-light, w/no glasses, normal, right-light, sad, sleepy, surprised, and wink. We select all facial images of YALE database to complete our experiments in this part.
\begin{figure}[!htbp]
\centering
\includegraphics[width=7cm]{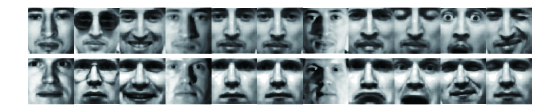}
\caption{Partial images of YALE face database, in which all individuals are chosen for experiments and the size of each image is $80\times 80$ pixels.}
\label{YALE}
\end{figure}

Figure \ref{ORL} shows the partial images of ORL facial database collected under relatively ideal situations, which means that images of ORL contain less changes compared with PIE, FERET and YALE.
\begin{figure}[!htbp]
\centering
\includegraphics[width=7cm]{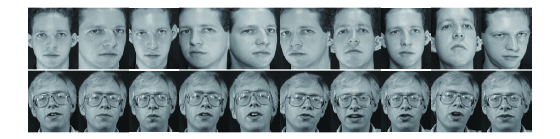}
\caption{Partial images of ORL, this database composed by 40 individuals and 10 images per person with the size of $112\times 92$ pixels.}
\label{ORL}
\end{figure}

Nearest-neighbour classifier is used to finish recognition step in our experiments and running time is the average value by running 20 times. More reliable results can be obtained through using different number of training images per person. In this way we can analytically compare the performances of \emph{Discriminative PCA}, DLDA and PCA more  comprehensive. The subscript of denotion `Property$\bf{_{Training \ number}}$' showed in tables \ref{tab_PIE}-\ref{tab_ORL} stands for this index. There are three cases in PIE database we design for experiments. It is clearly observe from table \ref{tab_PIE} that whenever training number is small or large (from 5 to 20 images for each person), \emph{Discriminative PCA} always be far superior to PCA and DLDA in recognition accuracy. Particularly when the training number is small, the advantage is more significant. And in running time, \emph{Discriminative PCA} has a comparable results with PCA, in special case, it is even faster than PCA. Among the three algorithms, results of DLDA and PCA are consistent with our analysis that DLDA generally performs better in recognition rate whereas PCA is dominant in computational cost.
\begin{table*}[htbp] 
\centering
\begin{tabular}{cccc}
\hline 
Property$\bf{_{Training \ number}}$ \quad \quad  & PCA \quad \quad & DLDA \quad \quad & \textit{Discriminative PCA} \\
\hline 
Accuracy$\bf{_5}$\quad \quad & 31.36\%  \quad  \quad & 26.82\%  \quad \quad & \textcolor{red}{43.79\% }  \\
Accuracy$\bf{_{15}}$\quad \quad & 53.53\%  \quad \quad & 93.53\%   \quad \quad & \textcolor{red}{95.88\% }  \\
Accuracy$\bf{_{20}}$\quad \quad & 69.66\%  \quad  \quad & 93.33\%   \quad  \quad & \textcolor{red}{95.86\% }  \\
Running time$\bf{_5}$\quad \quad & 0.668472 \quad \quad & 0.913645 \quad \quad & \textcolor{red}{0.667416} \\
Running time$\bf{_{15}}$\quad \quad & \textcolor{red}{0.706966} \quad \quad & 0.961559  \quad  \quad & 0.723554 \\
Running time$\bf{_{20}}$\quad \quad & \textcolor{red}{0.721420} \quad \quad & 1.005333 \quad \quad & 0.757531 \\
\hline 
\end{tabular}
\caption{Results of recognition accuracy and running time of PCA, DLDA and \emph{Discriminative PCA} when different training images per person used on PIE.}
\label{tab_PIE}
\end{table*}

Since images for each person we choose in PIE under much different situation in illumination of the same pose, results in table \ref{tab_PIE} indicate that \emph{Discriminative PCA} resolve the problem of sensitive to illumination for PCA by discriminant enhancement and at the same time reserve the low computational complexity. For obtaining intuitively view, we display first ten basis images of their own feature subspaces in figure \ref{basis_PIE}. We find that in PCA, the basis images contain much more illumination information than \emph{Discriminative PCA} and DLDA.
\begin{figure}[!htbp]
\centering
\includegraphics[ width=7cm]{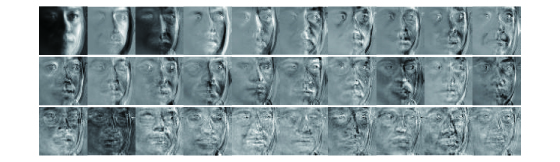}
\caption{From top row to bottom, the basis images on PIE database obtained sequentially by PCA, DLDA and \textit{Discriminative PCA}.}
\label{basis_PIE}
\end{figure}

Different from in PIE, images of FERET we select are mainly focus on variations in  poses and expressions. Under these factors, results of table \ref{tab_FERET} also verify the similar conclusion that \emph{Discriminative PCA} is far ahead on recognition rate than DLDA and PCA. More worth mentioning is the stable performance of \emph{Discriminative PCA} even when training number is very small (2 images for each one). With increasing the training images, recognition rates of DLDA and \emph{Discriminative PCA} higher, which is a different phenomenon compared with PCA. Even the training number is large, PCA can not perform well when face images much variant in illumination, pose and expressions, etc. It proves that PCA has good properties under ideal situations such in table \ref{tab_ORL} shows. 
\begin{table*}[htbp]
\centering
\begin{tabular}{cccc}
\hline 
Property$\bf{_{Training \ number}}$ \quad \quad & PCA \quad \quad & DLDA \quad \quad & \textit{Discriminative PCA} \\
\hline 
Accuracy$\bf{_2}$\quad \quad & 55.20\%  \quad \quad & 28.00\%   \quad \quad & \textcolor{red}{73.20\% }  \\
Accuracy$\bf{_3}$\quad \quad & 45.00\%  \quad \quad & 63.00\%   \quad \quad & \textcolor{red}{74.50\% }  \\
Accuracy$\bf{_4}$\quad \quad & 52.00\%  \quad \quad & 76.00\%   \quad \quad & \textcolor{red}{79.33\% }  \\
Accuracy$\bf{_5}$\quad \quad & 40.00\%  \quad \quad & 80.00\%   \quad \quad & \textcolor{red}{90.00\% }  \\
Running time$\bf{_2}$\quad \quad & \textcolor{red}{0.223107} \quad \quad & 0.798677 \quad \quad & 0.366499 \\
Running time$\bf{_3}$\quad \quad & \textcolor{red}{0.239527} \quad \quad & 0.848357 \quad \quad & 0.243254 \\
Running time$\bf{_4}$\quad \quad & \textcolor{red}{0.241580} \quad \quad & 0.824079  \quad \quad & 0.275457 \\
Running time$\bf{_5}$\quad \quad & \textcolor{red}{0.258035} \quad \quad & 0.828137  \quad \quad & 0.273732 \\
\hline 
\end{tabular}
\caption{Results of recognition accuracy and running time of PCA, DLDA and \emph{Discriminative PCA} when different training images per person used on FERET.}
\label{tab_FERET}
\end{table*}

Similar results can be concluded through observing figure \ref{basis_FERET}, which shows the first ten basis images of three feature subspaces. The basis images of PCA contain more variant information in pose and expression than \emph{Discriminative PCA} and DLDA.
\begin{figure}[!htbp]
\centering
\includegraphics[ width=7cm]{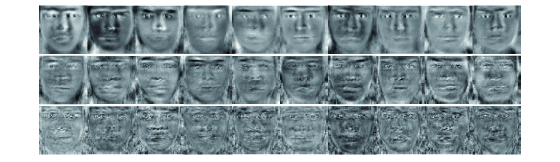}
\caption{Basis images on FERET database calculated by various algorithms: first row is PCA, second row is DLDA and last row is \textit{Discriminative PCA}.}
\label{basis_FERET}
\end{figure}

Different from other facial databases we used, images of YALE contain occlusion changes such as wearing glasses or not, furthermore the expression changes more significant. Without exception
 \emph{Discriminative PCA} still outperform in recognition accuracy and also in running time in this case. When the number of images used for training is larger, PCA performs better than DLDA, details displayed in table \ref{tab_YALE}. 

\begin{table*}[htbp]
\centering
\begin{tabular}{cccc}
\hline 
Property$\bf{_{Training \ number}}$ \quad  & PCA \quad & DLDA \quad & \textit{Discriminative PCA} \\
\hline 
Accuracy$\bf{_3}$\quad \quad & 62.86\%  \quad \quad & 66.67\%   \quad \quad & \textcolor{red}{75.24\% }  \\
Accuracy$\bf{_4}$\quad \quad & 57.14\%  \quad \quad & 72.38\%   \quad \quad & \textcolor{red}{78.10\% }  \\
Accuracy$\bf{_5}$\quad \quad & 81.11\%  \quad \quad & 77.78\%   \quad \quad &\textcolor{red}{82.22\% }  \\
Accuracy$\bf{_7}$\quad \quad & 91.11\%  \quad \quad & 80.00\%   \quad \quad &\textcolor{red}{95.56\% }  \\
Running time$\bf{_3}$\quad \quad & 0.221750 \quad \quad & 0.769583 \quad \quad & \textcolor{red}{0.216452} \\
Running time$\bf{_4}$\quad \quad & 0.246141 \quad \quad & 0.847130 \quad \quad & \textcolor{red}{0.245404} \\
Running time$\bf{_5}$\quad \quad & \textcolor{red}{0.238664} \quad \quad & 0.781730  \quad \quad & 0.245448 \\
Running time$\bf{_7}$\quad \quad & 0.228120 \quad \quad & 0.764449  \quad \quad & \textcolor{red}{0.224151} \\
\hline 
\end{tabular}
\caption{Results of recognition accuracy and running time of PCA, DLDA and \emph{Discriminative PCA} when different training images per person used on YALE.}
\label{tab_YALE}
\end{table*}

Among the four databases, only on ideal facial database ORL, PCA has stable performance that with training number increase, the recognition rate higher thereof. And in this case when training number is small \emph{Discriminative PCA} still far superior than PCA, whereas if training number is large the three approaches have same good performances as table \ref{tab_ORL} shows.
\begin{table*}[htbp]
\centering
\begin{tabular}{cccc}
\hline 
Property$\bf{_{Training \ number}}$ \quad \quad & PCA \quad \quad & DLDA \quad \quad & \textit{Discriminative PCA} \\
\hline 
Accuracy$\bf{_3}$\quad \quad & 85.36\%  \quad \quad & 77.86\%   \quad \quad & \textcolor{red}{90.36\% }  \\
Accuracy$\bf{_5}$\quad \quad & 93.50\%  \quad \quad & 89.00\%   \quad \quad & \textcolor{red}{95.00\% }  \\
Accuracy$\bf{_7}$\quad \quad & 95.83\%   \quad \quad & 95.83\%   \quad \quad & 95.83\%   \\
Running time$\bf{_3}$\quad \quad & 0.795025 \quad \quad & 2.389738 \quad \quad & \textcolor{red}{0.777746} \\
Running time$\bf{_5}$\quad \quad & \textcolor{red}{0.878740} \quad \quad & 2.351540 \quad \quad & 0.947311 \\
Running time$\bf{_7}$\quad \quad & \textcolor{red}{0.851204} \quad \quad & 2.437706  \quad \quad & 0.906449 \\
\hline 
\end{tabular}
\caption{Results of recognition accuracy and running time of PCA, DLDA and \emph{Discriminative PCA} when different training images per person used on ORL.}
\label{tab_ORL}
\end{table*}

\section{Conclusions and Future Work}
We propose a novel feature extraction approach denoted by \emph{Discriminative PCA} in this paper. The main purpose of \emph{Discriminative PCA} is trying to find a feature subspace contains discriminant principal components. For achieving our goal, LDA is used to enhance the separability for PCA. The core idea of \emph{Discriminative PCA} is performing PCA on discriminative matrix. During the implementation process, we adopt DLDA strategy to solve SSS problem when compute ${\bf{\widetilde{S}}_b}$ and ${\bf{\widetilde{S}}_w}$ of converted training sample matrix $\bf{\Omega^T\Omega}$, which is a trick that simultaneously reduce the computational complexity and solve SSS problem for the phase of calculating discriminative matrix. The superiorities of \emph{Discriminative PCA} have been proved through experimental results on four popular facial databases in recognition rate and average running time. We remark that the number of discarded eigenvectors with top $m$ largest eigenvalues in step \ref{step_5} and basis image number $p$ in step \ref{step_9} of algorithm \ref{alg_DPCA} is important to face recognition rate. Therefore how to select appropriate $m,\ p$ is a task need to be completed. Another point is that in the experimental part, we regularize discriminant matrices by mean value rule. If we use of maximum rule, the results in running time is faster than use of mean value rule whereas the recognize accuracy relatively lower a little bit than using mean value rule, which means that the performance of face recognition for \emph{Discriminative PCA} is not so outstanding among the compared approaches PCA and DLDA.

On the other hand, the proposed \emph{Discriminative PCA} is a linear pattern recognition approach, however many pattern samples lie on non-linear manifold to lead that linear models can not extract and represent non-linear information thereof well, adding discriminant information to non-linear approaches, such as kernel principal component analysis in order to improve the performances therefore is our next work.

\section*{Acknowledgement}

This work is jointly supported by the grants from Guangxi  Science and Technology Base and Talent Specialized Project No. 2018AD19038 and Doctoral Scientific Research Foundation No. GLUTQD2017142 of Guilin University of Technology. 
\section*{References}

\bibliography{DiscriminativePCA-PR}

\end{document}